\documentclass[letterpaper, 10 pt, conference]{ieeeconf}  

\IEEEoverridecommandlockouts                              
\usepackage{cite}
\usepackage{amsmath,amssymb,amsfonts}
\usepackage{graphicx}
\usepackage{todonotes}
\usepackage{amsthm}
\usepackage[normalem]{ulem}
\usepackage{algorithm}
\usepackage{algpseudocode}

\newtheorem{theorem}{Theorem}[section]
\newtheorem{lemma}[theorem]{Lemma}
\newtheorem{assumption}[theorem]{Assumption}
\newtheorem{definition}{Definition}

\def\BibTeX{{\rm B\kern-.05em{\sc i\kern-.025em b}\kern-.08em
    T\kern-.1667em\lower.7ex\hbox{E}\kern-.125emX}}

\title{\LARGE \bf
Active Bayesian Inference for Robust Control under \\Sensor False Data Injection Attacks
}

\author{Axel Andersson and Gy\"{o}rgy D\'{a}n
\thanks{This work was partially supported by the Wallenberg AI, Autonomous Systems and
Software Program (WASP) funded by the Knut and Alice Wallenberg Foundation.}
\thanks{Authors are with the Department of Network and Systems Engineering, School of Electrical Engineering and Computer Science,
        KTH Royal Institute of Technology, Sweden
        {\tt\small \{axander|gyuri\}@kth.se}}%
}

\begin{document}

\maketitle
\thispagestyle{empty}
\pagestyle{empty}

\begin{abstract}

We present a framework for bridging the gap between sensor attack detection and recovery in cyber-physical systems. The proposed framework models modern-day, complex perception pipelines as bipartite graphs, which combined with anomaly detector alerts defines a Bayesian network for inferring compromised sensors. An active probing strategy exploits system nonlinearities to maximize distinguishability between attack hypotheses, while compromised sensors are selectively disabled to maintain reliable state estimation. We propose a threshold-based probing strategy and show its effectiveness via a simplified partially observable Markov decision process (POMDP) formulation. Experiments on an inverted pendulum under single and multi-sensor attacks show that our method significantly outperforms outlier-robust and prediction-based baselines, especially under prolonged attacks. 

\end{abstract}

\section{INTRODUCTION}
Cyber-Physical Systems (CPSs) integrate sensing, communication and physical processes, exposing attack surfaces in both the cyber and the physical domains. Sensor attacks, such as GPS spoofing \cite{gps-spoofing}, acoustic noise injection in gyroscopes \cite{gyro-attack} and electromagnetic interference on magnetometers \cite{magnetic-attack}, pose a serious threat to safety-critical CPSs such as unmanned aerial vehicles and autonomous cars. While progress has been made on detecting such attacks~\cite{choi-detecting-attacks, savior}, detection alone is insufficient. If the control system cannot appropriately respond to detections, the system remains vulnerable.

Existing approaches to attack recovery typically fall into two categories. The first is switching to a separate, pre-designed robust controller upon detection \cite{pid-piper, fast-attack-recovery}, which can be overly conservative and might not generalize well to new attack scenarios. The second relies on outlier-robust state estimation \cite{outlier-robust-ekfs} which scales the influence of suspicious measurements but does not explicitly identify compromised sensors. None of these approaches leverage the structural relationship between sensors and state estimates to isolate the attack. Furthermore, existing works overlook the complexity of modern state estimation pipelines that fuse heterogeneous sensors with different modalities and sampling rates, as exemplified by open-source flight controllers such as PX4~\cite{px4documentation}. 

In this work, we propose a framework that bridges the gap between attack detection and recovery by maintaining a belief over sensor attack states to selectively disable compromised sensors to preserve accurate state estimation. A key observation is that modern CPSs rely on multiple sensors of different modalities, whose measurements are fused through a perception pipeline. We model this pipeline as a bipartite graph that maps sensors to the components of the state estimate they influence. Combined with alerts from an existing anomaly detector, this graph structure defines a Bayesian network that enables inference over which sensors are compromised.

Beyond the passive inference framework, we introduce an \textit{active} probing strategy. By exploiting the non-linearity of the system dynamics, we design a probing control input  that  maximizes the distinguishability between state estimates obtained under different sensor configurations. We provide theoretical justification for the probing strategy by analyzing a simplified Partially Observable Markov Decision Process (POMDP) in which an agent must choose between a cheap and less accurate sensor and an expensive, more accurate one. We show that, under a Blackwell-dominance condition a probing policy based on a threshold structure in the belief state is a sensible choice. The system should probe when uncertainty about a sensor's integrity is high.

Our approach differs from \textit{dual control} in stochastic systems \cite{MesbahAli2018Smpc}, which seeks to simultaneously regulate the system while reducing parameter uncertainty. In contrast, we focus on identifying compromised sensors in adversarial settings. Additionally, the proposed framework incorporates a detailed model of the sensor processing pipeline. Our approach is related to active fault isolation  for leakage localization in water distribution networks~\cite{active-fault-detection}, but our approach differs in that we operate within a Bayesian belief framework over attack states, integrate the probing decision with a nominal tracking controller, and provide a theoretical analysis of when probing is optimal.

The contributions of this paper are:
\begin{itemize}
    \item \textbf{Perception graph model.} We model the perception pipeline as a bipartite graph relating sensors to state estimate components, enabling Bayesian inference over sensor attack states based anomaly detector alerts.

    \item \textbf{Detection-to-recovery framework.} We propose the LASE-AD algorithm, which maintains a belief over sensor attack states and selectively disables compromised sensors to preserve accurate state estimation, bridging the gap between attack detection and recovery.

    \item \textbf{Active probing for improved detection.} We design an information-seeking control input that exploits system non-linearity to increase the distinguishability between competing attack hypotheses. We further provide theoretical justification for a threshold-based probing policy on the belief state.
\end{itemize}

The rest of the paper is organized as follows. Section~\ref{sec::problem} introduces the system model, attack model and problem formulation. Section~\ref{sec::bayesian} presents the Bayesian inference framework for interpreting anomaly detector alerts. Section~\ref{sec::lase-ad} introduces the active learning strategy for attack detection and a simplified POMDP model motivating a threshold-based probing policy. Section~\ref{sec:numerical} presents an evaluation on an inverted pendulum and Section~\ref{sec::conclusion} concludes the paper.


\section{SYSTEM MODEL AND PROBLEM FORMULATION}
\label{sec::problem}
\subsection{System Model}
We consider a general, non-linear system of the form
\begin{equation}
    x_{k+1} = f(x_k) + g(x_k)u_k + w_k
    \label{eq:sys-dynamics}
\end{equation}
where $x_k \in \mathcal{X} \subseteq \mathbb{R}^{n_x}$ is the state, $u_k\in \mathcal{U} \subseteq \mathbb{R}^{n_u}$ is the control input, and $w_k \sim \mathcal{N}(\boldsymbol{0}, Q)$ is Gaussian process noise. A set of sensors is available, denoted by $\mathfrak{Z} = \{\mathfrak{z}_1, ... , \mathfrak{z}_m \}$. Each sensor is associated with a measurement function, and we denote the set of measurement functions by $\mathcal{H} = \{h_1, ..., h_m\}$, where each measurement function maps the state to a sensor-specific observation space, $h_i: \mathbb{R}^{n_x} \rightarrow \mathbb{R}^{p_i}$. For example, $h_i$ may represent the mapping that produces an RGB-image from a camera mounted on a UAV. Sensor measurements are noisy; the raw measurement generated by sensor $i$ at time step 
$k$ is
\begin{equation}
    y_{i,k}^r = h_i(x_k) + v_{i,k}, \quad v_{i,k} \sim \mathcal{N}(\boldsymbol{0}, R_i),
\end{equation}
where $v_{i,k}$ is the measurement noise and $ $ $R_i$ is the noise covariance matrix of sensor $\mathfrak{z}_i$. The collection of raw measurements from all sensors is denoted by $y_k^r = (y_{1,k}^r, ..., y_{m,k}^r) \in \mathcal{Y}_r$.

We assume the existence of a  perception pipeline  $\mathcal{P}$ that processes the raw sensor measurements $(y_{1,k}^r, ..., y_{m,k}^r)$ and produces a preprocessed measurement, referred to as a \textit{soft measurement}, $y_k = \mathcal{P}(y_{1,k}^r,...,y_{1,k}^r) \in \mathcal{Y} \subseteq \mathbb{R}^{n_x}$. The relationship between the soft measurement and the system state is modeled as
\begin{equation}
    y_k = Cx_k + \epsilon_k
    \label{eq:soft-measurement-state}
\end{equation}
where $C$ is assumed invertible and $\epsilon_k$ represents the modeling error of $\mathcal{P}$. 

The data flow in $\mathcal{P}$ is represented by a bipartite graph $\mathcal{G} = (\mathfrak{Z}\cup V_{y}, E)$, where $V_{y}$ is the set corresponding to the components of the soft measurement $y_k$. An edge $(\mathfrak{z}_i, v_j) \in E$ exists if sensor $\mathfrak{z}_i$ contributes to the estimation of component $j$ of $y_k$.

\subsection{Attack Model}
We consider a powerful attacker that can alter the raw sensor measurement of any sensor, resulting in measurement 
\begin{equation}
    y_{i,k}^a = y_{i,k}^r + b_{i,k}
    \label{eq:attack}
\end{equation}
where $b_{i,k}$ is an attack vector of appropriate size.
The attacker can alter the measurements of multiple sensors simultaneously, the collection of attack vectors will be denoted by $b_k = \{b_{i,k}\}_{i=1}^{m}$. We assume that the attacker cannot manipulate the control input 
$u_k$ or modify the system's source code to disable the anomaly detection mechanism or the proposed framework.

\subsection{State Estimation and Anomaly Detector}
Different soft measurements can be constructed by selectively disabling sensors. Let $\mathcal{P}_S$ denote a perception pipeline in which only the sensors in $S\subseteq\mathfrak{Z}$ are enabled. For each $S \in 2^{\mathfrak{Z}}$, the corresponding pipeline produces soft measurements $y_k^S$ satisfying an equation on the form of (\ref{eq:soft-measurement-state}). The pipeline that uses all sensors is denoted simply by $\mathcal{P}$. State estimation is done with an extended Kalman filter (EKF) that is updated updated with the soft measurements, $y_k$.

An EKF-based anomaly detection algorithm $\mathcal{D}$, such as that proposed in \cite{savior}, is assumed to detect deviations in $y_k$ by comparing it to the filter output. The detector is modeled as a mapping $\mathcal{D}: \mathcal{Y} \times \mathcal{U} \rightarrow \mathcal{A} =\{0,1\}^{n_x}$, which produces element-wise alerts for the soft measurement, 
$a_k = (a_{1,k},\ldots,a_{n_x,k})$, where $a_{i,k}$ indicates an alert for component $i$ of $y_k$. 
let $z_{i,k} \in \{0,1\}$ denote whether sensor $\mathfrak{z}_i$  is under attack at time $k$ (i.e., $b_{i,k} \neq 0$ in eq. (\ref{eq:attack})). 
Similarly, let $s_{j,k}\in\{0,1\}$ denote whether component $j$ of $y_k$ is compromised at time $k$. We assume a conservative causal attack propagation model, 
\begin{equation}
    s_{j,k} = \max_{i \in N(j)}z_{i,k}
    \label{eq:s-def}
\end{equation}
where $N(j) = \{i: (i,j)\in E \}$ denotes the set of neighbors to vertex $j$. That is, if any of the sensors that contribute to the estimation of component $j$ is attacked, then the corresponding component is considered compromised. 

We model the alert generation process for each component as a two-state Markov process conditioned on the attack variable $s_{i,k}$.
The anomaly detector $\mathcal{D}$ is assumed to satisfy the following transition probabilities: 
\begin{subequations}
    \begin{align}
        \mathbb{P}(a_{i,k} = 1 | a_{i,k-1}=0,s_{i,k} = 0) &= \eta_{i,0} \label{eq:eta0} \\
        \mathbb{P}(a_{i,k} = 1 | a_{i,k-1}=1,s_{i,k} = 0) &= \eta_{i,1} \label{eq:eta1} \\
        \mathbb{P}(a_{i,k} = 0 | a_{i,k-1}=0,s_{i,k} = 1) &= \xi_{i,0} \label{eq:xi0} \\
        \mathbb{P}(a_{i,k} = 0 | a_{i,k-1}=1,s_{i,k} = 1) &= \xi_{i,1}. \label{eq:xi1}
    \end{align}
\end{subequations}
Equations (\ref{eq:eta0}, \ref{eq:eta1}) characterize false alarm probabilities and (\ref{eq:xi0}, \ref{eq:xi1}) the missed detection probabilities. The dependence on the previous alert $a_{i,k-1}$ captures temporal correlations in detector outputs, which arise naturally in sequential detection methods such as CUSUM~\cite{signal-detection-and-parameter-estimation}.  The false alarm probabilities $\eta_{i,0}$ and $\eta_{i,1}$ are assumed to be known, as they can be estimated by evaluating $\mathcal{D}$ on benign data. The missed detection probabilities are harder to estimate in lack of representative attack data. We therefore 
 treat $\xi_{i,0}$ and $\xi_{i,1}$ as unknown, but we assume a Beta distributed prior,
$\xi_{i,\cdot} \sim \text{Beta}(\beta_{i,\cdot}^1, \beta_{i,\cdot}^2)$. The resulting model can be interpreted as a graphical model where the latent attack variable $s_{i,k}$ influences the alert variable $a_{i,k}$, while temporal correlation is captured through the Markovian dependence on $a_{i,k-1}$.

\subsection{Problem Formulation}
We consider a control system described by (\ref{eq:sys-dynamics}), equipped with a set of sensors $\mathfrak{Z}$, a family of perception pipelines $\{\mathcal{P}_S : S \in 2^{\mathfrak{Z}} \}$ and an anomaly detection algorithm $\mathcal{D}$. Our objective is to enable recovery from sensor attacks. 
We formulate this as a reference tracking problem in which a policy $\kappa: \mathcal{A} \times \mathcal{Y}_r \rightarrow 2^\mathfrak{Z}$ maps alerts and raw measurements to a subset of trusted sensors. At each time step $k$, the policy $\kappa$  selects a set of trustworthy sensors $S_k$ and the corresponding perception pipeline $\mathcal{P}_{S_k}$ is used to construct the soft measurement. We want to solve the following problem:
\begin{subequations}
    \begin{align}
        \min_{\kappa} &\mathbb{E}\Big[\sum_{k=0}^\infty \gamma^k(x_k^{\top}M_xx_k + u_k^\top M_uu_k) \Big] 
        \label{eq:ref-opt-problem}
        \\
        \text{s.t} \quad &x_{k+1} = f(x_k)+g(x_k)u_k+w_k \\
        \quad &\mathbb{E}(x_k) \in \mathcal{X}_{\text{safe}}
        \label{eq:safety-ref-opt-problem}
        \\
        &u_k \in \mathcal{U} \quad \quad \forall k \in \mathbb{N},
        \label{eq:saturation-ref-opt-problem}
    \end{align}
    \label{eq:main-opt-problem}
\end{subequations}
\noindent where $M_x\in \mathbb{R}^{n_x \times n_x}$, $M_u\in \mathbb{R}^{n_u \times n_u}$ and $\gamma \in (0,1)$ is the discount factor. Constraint (\ref{eq:safety-ref-opt-problem}) ensures system safety in expectation and (\ref{eq:saturation-ref-opt-problem}) models hardware limitations by forcing the control input to belong to a bounded set, $\mathcal{U}$. The reference signal should be tracked accurately, even in the event of a sensor attack of the form (\ref{eq:attack}). 

We propose to address the problem in a Bayesian dual control framework: we maintain a belief over sensor attack states, and use an active learning strategy that excites the system to generate informative observations for belief refinement whenever beneficial. While (\ref{eq:main-opt-problem}) can be formulated as a partially observable Markov decision process (POMDP), solving it exactly is computationally intractable for real-time implementation. Our approach instead uses tractable approximations to retain the key benefits of dual control without the complexity of solving the full POMDP.

\section{Bayesian Inference for Sensor Attack Identification}
\label{sec::bayesian}

To address (\ref{eq:main-opt-problem}), we first  establish a relationship between alerts generated by the anomaly detection system and the attack status of each sensor. Let $\pi_{i,k} := \mathbb{P}(z_{i,k} = 1)$ denote the prior belief that sensor $\mathfrak{z}_i$ is under attack at time $k$. For tractability, we assume that the  attack status variables at a given time are independent, i.e., the joint probability of an attack $z_k=(z_{1,k},...,z_{m,k})$ factorizes as
\begin{equation}
    \mathbb{P}(z_k = z') = \prod_{i=1}^m\pi_{i,k}^{z'_i}(1-\pi_{i,k})^{(1-z'_i)}.
    \label{eq:Pz_k}
\end{equation}
The independence assumption is reasonable as a powerful attacker could target any sensor. 
Upon receiving a set of alerts from the anomaly detector $\mathcal{D}$, $a_k = \{a_{i,k} \}_{i=1}^{n_x}$, the posterior belief $\pi_{i,k}(a_k) := \mathbb{P}(z_{i,k}=1|a_k, a_{k-1})$ can be computed by combining the causal relationship between sensor states and soft measurements in (\ref{eq:s-def}), with the detector performance characterized in (\ref{eq:eta0}-\ref{eq:xi1}). Note that (\ref{eq:s-def}) and (\ref{eq:eta0}-\ref{eq:xi1}) together with the internal structure of $\mathcal{P}$ captured by the graph $\mathcal{G}$, define a Bayesian network (BN). An example of such a BN is shown in Fig.~(\ref{fig:bayesiannet}). 
\begin{figure}
    \centering
    \includegraphics[width=0.9\linewidth]{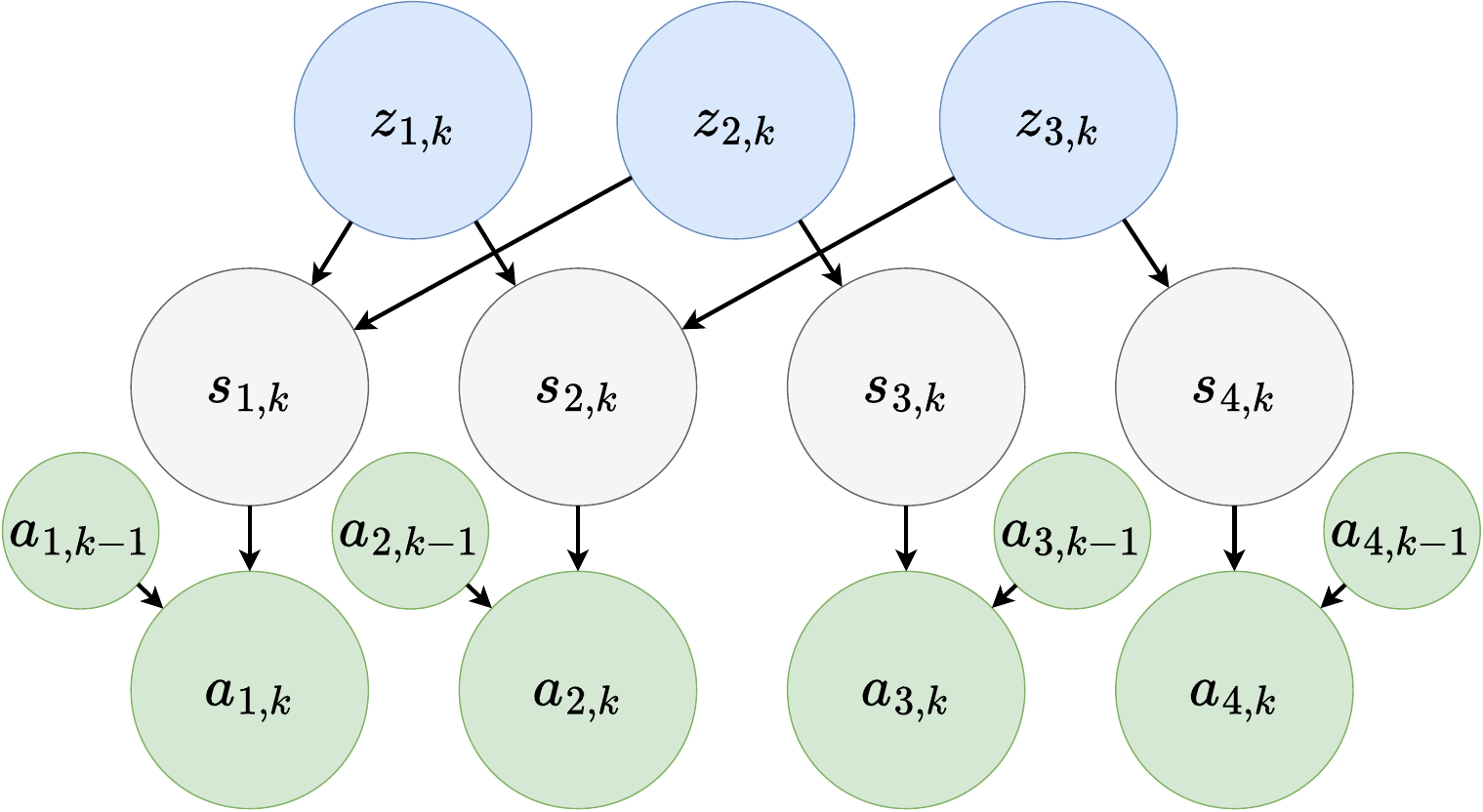}
    \caption{Bayesian network of sensor attack status variables ($z_k$), state attack status variables ($s_k$), and alerts ($a_k, a_{k-1}$) for the cart-pole problem with a wheel encoder, a camera, and an IMU as sensors (c.f. Section~\ref{sec:numerical}).}
    \label{fig:bayesiannet}
\end{figure}
The posterior $\pi_{i,k}(a_k)$ can be obtained from probabilistic inference on the Bayesian network,
\begin{align}
    &\pi_{i,k}(a_k) := \mathbb{P}(z_{i,k}|a_k, a_{k-1}) = \\
    &\frac{\pi_{i,k}\mathbb{P}(a_k|z_{i,k}=1, a_{k-1})}{\pi_{i,k}\mathbb{P}(a_k|z_{i,k}=1, a_{k-1}) + (1-\pi_{i,k})\mathbb{P}(a_k|z_{i,k}=0, a_{k-1})},
    \label{eq:alert-belief-update}
\end{align}
where
\begin{align*}
    &\mathbb{P}(a_k|z_{i,k}=z', a_{k-1}) = \\
    &\quad\quad\sum_{z_{-i,k} \in \{0,1\}^m } \mathbb{P}(z_{-i,k})\mathbb{P}(a_k | z_{i,k}=z', z_{-i,k}, a_{k-1}),
\end{align*}
and $\mathbb{P}(z_{-i,k})$ is given by an expression similar to (\ref{eq:Pz_k}) but without the $i$:th component. Furthermore, from the Bayesian network we have
\begin{align}
    \nonumber & \mathbb{P}(a_k|z_k,a_{k-1}) = \\
    \nonumber & \mathbb{P}(a_k=a|s_k=s,a_{k-1}=a')\mathbb{P}(s_k=s|z_k=z) = \\
    \;\;\nonumber & \sum_s \boldsymbol{1}_{[s_i = \max_{j\in N(i)}z_j \quad \forall i \in \{1,...,n_x\}]} \cdot \\
    & \;\;\prod_{i=1}^{n_x} 
    \begin{cases}
        \eta^{a_i}_{i,a'_i}(1-\eta_{i,a'_i})^{1-a_i}, & s_{i,k}=1 \\
        \int_0^1 \xi^{1-a_i}_{i,a'_i}(1-\xi_{i,a'_i})^{a_i}\text{Beta}(\beta^1_{i,a'}, \beta^2_{i,a'})d\xi, & s_{i,k}=0
    \end{cases}
    \label{eq:bayesnet-fp-fn}
\end{align}
At every time step $k$ the belief $\pi_k$ is computed based on alerts $a_k$ from the anomaly detector, using the posterior $\pi_{i,k-1}$ as the prior to $\pi_{i,k}$. 

\section{LEARNING-AUGMENTED STATE ESTIMATION FOR SENSOR ATTACK DETECTION}
\label{sec::lase-ad}
Next, we design an active learning scheme that excites the system to accelerate the identification of attacked sensors.

\subsection{Information-Optimal Control}

\label{sec:active-learning}
Eqns.~\eqref{eq:Pz_k}-\eqref{eq:bayesnet-fp-fn} allow us to estimate sensor attack probabilities $\pi_{i,k}$ based on alerts $a_k$ from the anomaly detector $\mathcal{D}$. 
A complementary approach is to \textit{actively} probe for attacks through the control input $u_k$. In particular, consider two hypotheses, $h_0$ and $h_1$, corresponding to different sensor attack assumptions, resulting in state estimates $\hat{x}_k^{h_0}$ and $\hat{x}_k^{h_1}$ obtained using two perception pipelines.
The predicted system output and consequently the innovation of the EKF follows a different distribution under each hypothesis. The innovation at time $k+1$ is given by
\begin{equation}
    r_{k+1} = y_{k+1}-C\hat{x}_{k+1:k},
\end{equation}
where $\hat{x}_{k+1:k}$ is the EKF state prediction at time $k$ about the state at time $k+1$. 
We design the probing control input to maximize the Kullback–Leibler (KL) divergence between the innovation distributions under the two hypotheses. Specifically, we consider $KL(p(r_{k+1}|h_0) ||p(r_{k+1}|h_1))$, which measures the expected log-likelihood ratio under the null-hypothesis $h_0$. 
Maximizing this quantity makes the innovations as informative as possible for discriminating between the hypotheses. This connects directly to sequential hypothesis testing: a larger KL divergence implies faster accumulation of evidence, and thus quicker detection \cite{signal-detection-and-parameter-estimation}.
Since $r_{k+1}$ is Gaussian under both $h_0$ and $h_1$, the KL divergence admits a closed-form expression
\begin{align}
   \nonumber &KL(p(r_{k+1}|h_0) || p(r_{k+1}|h_1)) = \\
    &\frac{1}{2}\Big[ ||r_{k+1}^{h_1}-r_{k+1}^{h_0}||_{(P_{k+1:k}^{h_1})^{-1}}^2 + c(P^{h_1}_{k+1:k}, P^{h_0}_{k+1:k})\Big] \text{ and} \\
    &r_{k+1}^{h_1}-r_{k+1}^{h_0} = F_k^{h_0,h_1} + G_k^{h_0, h_1}u_k,
\end{align}
The matrices are given by $F_k^{h_0,h_1} = C(f(\hat{x}_{k}^{h_0}) - f(\hat{x}_{k}^{h_1}))$ and $G_k^{h_0, h_1} =C(g(\hat{x}_{k}^{h_0})-g(\hat{x}_{k}^{h_1}))$. Note that if $G_k^{h_0, h_1}$ were not a function of $x_k$, the control input $u_k$ would not have any influence on the innovation difference as $G_k^{h_0, h_1}=0$. $||\cdot||_P$ is the Mahalanobis norm with matrix $P$ and $c(P^{h_1}_{k+1:k}, P^{h_0}_{k+1:k}) = \text{tr}((P^{h_1}_{k+1:k})^{-1}P^{h_0}_{k+1:k})-\ln|P^{h_0}_{k+1:k}|/|P^{h_1}_{k+1:k}|-n_x \approx 0$ if the covariance matrices are similar.

We thus solve the following optimization problem to obtain the probing control signal
\begin{subequations} \label{eq:opt-problem}
    \begin{align}
        \max_{u_k} & KL(p(r_{k+1}|h_0) || p(r_{k+1}|h_1)) \\
        \text{s.t} \quad & f(\hat{x}_{k}^{h_0} ) + g(\hat{x}_{k}^{h_0})u_k \in \mathcal{X}_{\text{safe}} \label{eq:opt-constraint1} \\
        \quad & f(\hat{x}_{k}^{h_1} ) + g(\hat{x}_{k}^{h_1})u_k \in \mathcal{X}_{\text{safe}} \label{eq:opt-constraint2}\\
        \quad &  u_k \in \mathcal{U}.
    \end{align}
    \label{eq:probing-ctr-signal}
\end{subequations}
Problem (\ref{eq:probing-ctr-signal}) finds the control input that maximizes the KL divergence between the innovation distributions under the two hypotheses, while enforcing that the system remains within the safe set under both hypotheses when the input is applied. In practice, we assume the innovation covariance matrices are similar, in which case the KL divergence reduces to the Mahalanobis distance between the predicted innovation means.

The null-hypothesis uses the nominal pipeline $\mathcal{P}$, while probing sensor $\mathfrak{z}_i$ corresponds to the pipeline $\mathcal{P}_{\mathfrak{Z}\setminus \{\mathfrak{z}_i \}}$. A FIFO-buffer of length $T_{\text{r}}$ stores past estimates and measurements, allowing $\hat{x}_k^{\mathfrak{Z}\setminus \{\mathfrak{z}_i \}}$ to be reconstructed by re-running an EKF from $\hat{x}_{k-T_{\text{r}}}$ under $\mathcal{P}_{\mathfrak{Z}\setminus \{\mathfrak{z}_i \}}$. We denote the state estimate produced this way using perception pipeline $\mathcal{P}_S$ by $\hat{x}_k^S$.

The effect of applying the probing control input obtained from solving (\ref{eq:probing-ctr-signal}) can be exploited at the next time step to update the attack belief for sensor $\mathfrak{z}_i$, using the likelihood ratio
\begin{align}
     \pi_{i,k} \leftarrow \frac{1}{1 + \frac{1-\pi_{i,k}}{\pi_{i,k}} \frac{\mathbb{P}(y_k|z_{i,k}=0)}{\mathbb{P}(y_k|z_{i,k}=1)} } \approx 
    \frac{1}{1 + \frac{1-\pi_{i,k}}{\pi_{i,k}} \frac{\mathcal{N}(y_k; \mu_{h_1}, \Sigma_{h_1})}{\mathcal{N}(y_k;\mu_{h_0}, \Sigma_{h_0})} },
    \label{eq:probing-update}
\end{align}
where $(\mu_{h_0}, \Sigma_{h_0}) = (C\hat{x}_{k:k-1}, C^\top P_{k:k-1}C+R)$ and $(\mu_{h_1}, \Sigma_{h_1}) = (C\hat{x}_{k:k-1}^{\mathfrak{Z}\setminus \{ \mathfrak{z}_i \}}, C^\top P_{k:k-1}^{\mathfrak{Z}\setminus \{ \mathfrak{z}_i\}}C+R_{\mathfrak{Z}\setminus \{ \mathfrak{z}_i\}} )$. Here, the $P$-matrices denote  EKF covariance matrices and $R$ are the measurement noise covariance matrices. 

If sensor $\mathfrak{z}_i$ is not attacked, the likelihood-ratio in (\ref{eq:probing-update}) should be close to one. However, if $\mathfrak{z}_i$ has been attacked for some time, the nominal state estimate $\hat{x}$ may become contaminated, while $\hat{x}^{\mathfrak{Z} \setminus \{ \mathfrak{z}_i \}}$ remains unaffected. In this case, the likelihood ratio is small, leading to an increase of  the attack belief.

\subsection{Inference-aware Control Scheduling}
Using the probing control input introduces a trade-off: while the objective in (\ref{eq:main-opt-problem}) is to minimize the control error,  the nominal controller is designed to achieve this objective in the absence of attacks. Applying a probing input may temporarily deviate the system from its reference trajectory, but it can accelerate the identification of compromised sensors and improve state estimation, potentially outweighing the short-term performance degradation.

Instead of reformulating problem (\ref{eq:main-opt-problem}) as a POMDP,  we propose a threshold policy: probe sensor $\mathfrak{z}_i$ if $\pi_{i,k}\in (\underline{\pi}, \overline{\pi})$ and use the nominal controller otherwise. This policy captures the intuition that probing is most informative when the uncertainty about a sensor's integrity is highest. The pseudo-code of the proposed \textbf{L}earning-\textbf{A}ugmented \textbf{S}tate \textbf{E}stimation for \textbf{A}ttack \textbf{D}etection (LASE-AD) algorithm is shown in Algorithm \ref{alg:lase-ad}.

In what follows, we justify the threshold policy  via a simplified POMDP, 
where a binary Markov chain $z_k$ (analogous to sensor attack state $z_{i,k}$) is observed through either a cheap, less accurate sensor (nominal control input) or an expensive, more accurate one (active probing). The objective is to minimize misclassification cost plus sensor usage cost, serving as a surrogate to (\ref{eq:main-opt-problem}). We show that the optimal policy probes whenever a threshold policy does, establishing it as a conservative approximation. 

\subsubsection{Simplified POMDP Model of Active Sensing}
\label{sec:simple-pomdp}
Let $\{z_k\}_{k=0,1,2...}$ be a time-homogeneous Markov chain on the binary state space $\mathcal{Z} = \{0,1\}$ with transition matrix
\begin{equation}
    A = \begin{bmatrix}
    a_{00} & a_{01} \\
    a_{10} & a_{11}
    \end{bmatrix}, \quad a_{ij} = \mathbb{P}(z_{k+1}=j|z_k=i),
\end{equation}
such that $a_{i1} = 1-a_{i0}$ for $i = 0,1$. Furthermore, we have that $a_{00}, a_{11} \in (0,1)$ so that the Markov chain is ergodic.

The state variables $z_k$ are not directly observable. At each time step $k$, an agent can select either a Cheap or an Expensive sensor $s_k\in\{C,E\}$ and receives a binary observation $o_k\in\{0,1\}$. The observation likelihoods are parametrized by their false positive rate $\alpha_s$ and true positive (detection) rate $\tau_s = 1-\beta_s$,
\begin{subequations}
\begin{align}
    &\mathbb{P}(o_k = 1 | z_k = 0, s_k=s) = \alpha_s \\
    &\mathbb{P}(o_k = 1 | z_k = 1, s_k=s) = \tau_s
\end{align}
\end{subequations}
We make the following assumption on the sensors.
\begin{assumption}
    (Sensor ordering) Sensor E is strictly more informative than Sensor C in the sense that
    \begin{equation*}
    \alpha_E < \alpha_C \quad and \quad \tau_E > \tau_C
    \end{equation*}
    and both sensors are non-trivial: $\alpha_s < \tau_s$ for $s \in \{C,E\}$.
    \label{A:sensor-ordering}
\end{assumption}

\begin{lemma}
    Assumption \ref{A:sensor-ordering} is equivalent to Sensor C being a garbling of Sensor E. That is, there exists a stochastic matrix $G$ such that $O_C=O_EG$, where $O_s$ is the observation matrix of sensor $s$ with entries $(O_s)_{ij} = \mathbb{P}(o_k=i|z_k=j, s_k=s)$. 
    \label{L:Blackwell-dominance}
\end{lemma}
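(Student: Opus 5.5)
The plan is to reduce everything to a $2\times 2$ computation, since both state and observation spaces are binary. I write the channel from $o^E$ to $o^C$ as a stochastic matrix $G$ with entries $p=\mathbb{P}(o^C=1\mid o^E=1)$ and $q=\mathbb{P}(o^C=1\mid o^E=0)$. The stated identity $O_C=O_EG$ is read with the factors arranged so that $G$ acts on the observation index: with rows of $O_s$ indexed by states this is literally $O_C=O_EG$, and with the column convention of the lemma it is the transpose of that product. In either arrangement, the garbling identity is equivalent to the two scalar equations
\begin{equation*}
\alpha_C = q + (p-q)\,\alpha_E, \qquad \tau_C = q + (p-q)\,\tau_E .
\end{equation*}
So the whole lemma amounts to deciding when these equations admit a solution with $p,q\in[0,1]$.

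\textbf{Assumption \ref{A:sensor-ordering} implies garbling.} Subtracting the two equations gives $d:=p-q=(\tau_C-\alpha_C)/(\tau_E-\alpha_E)$. This quotient is well defined and positive because both sensors are non-trivial. It is also strictly less than $1$, since $\tau_C-\alpha_C<\tau_E-\alpha_E$ follows from $\tau_C<\tau_E$ and $\alpha_C>\alpha_E$.

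Back-substituting gives $q=\alpha_C-d\,\alpha_E$ and $p=\tau_C+d(1-\tau_E)$. I then check the bounds:
\begin{itemize}
\item $q>0$, because $\alpha_C>\alpha_E\ge d\,\alpha_E$.
\item $q<p$, because $d>0$.
\item $p<1$, because $p<\tau_C+(1-\tau_E)<1$.
\end{itemize}
Hence $G$ is a genuine stochastic matrix with $p>q$, and Sensor C is a garbling of Sensor E.

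\textbf{Garbling implies Assumption \ref{A:sensor-ordering}.} Here I would start from a stochastic $G$ with $O_C=O_EG$ and read the inequalities off the scalar equations. Non-triviality of C gives $\tau_C-\alpha_C=(p-q)(\tau_E-\alpha_E)>0$, which forces $p>q$ and also $p-q\le 1$. The remaining task is to derive $\alpha_C>\alpha_E$ and $\tau_C<\tau_E$. From the equations,
\begin{equation*}
\alpha_C-\alpha_E=q(1-\alpha_E)-(1-p)\alpha_E, \qquad \tau_E-\tau_C=(1-p)\tau_E-q(1-\tau_E).
\end{equation*}

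\textbf{Main obstacle: the converse as literally stated is false.} The sign of these two differences is not determined by stochasticity of $G$ alone. Two cases show this.
\begin{itemize}
\item $G=I$ (so $p=1$, $q=0$) gives $O_C=O_E$. This is a valid garbling, yet it violates the strict inequalities of Assumption \ref{A:sensor-ordering}.
\item $p=1$ with small $q>0$ gives $\alpha_C>\alpha_E$ but also $\tau_C>\tau_E$.
\end{itemize}
So garbling alone does not yield the assumption, and the stated equivalence cannot be proved without an extra condition.

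The argument I would therefore write proves the forward direction in full. For the backward direction, it identifies the additional condition on $G$ that is needed, namely
\begin{equation*}
q(1-\alpha_E)>(1-p)\alpha_E \quad\text{and}\quad (1-p)\tau_E>q(1-\tau_E).
\end{equation*}
These say that the channel strictly corrupts both observation values in the right proportions. Under this condition Assumption \ref{A:sensor-ordering} follows directly from the two displayed differences. The lemma should then be presented as this corrected equivalence rather than the unconditional one. For the threshold-policy argument that follows, only the forward direction is used, because Blackwell dominance of E over C is what enters later.
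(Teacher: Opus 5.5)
Your proposal is correct. It also does more than the paper, whose proof is only a pointer to Theorem 12.4.2 of \cite{blackwell-book}. That citation route is the natural one for larger observation alphabets. In the binary case, your explicit inversion is shorter and also sharper, because it shows exactly which direction holds.

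\textbf{Forward direction.} Since $\tau_E\neq\alpha_E$, the channel is uniquely determined: $d=(\tau_C-\alpha_C)/(\tau_E-\alpha_E)\in(0,1)$, $q=\alpha_C-d\alpha_E$ and $p=q+d$. Your bound checks are right, with one nit. If $\tau_E=1$, the step $p<\tau_C+(1-\tau_E)$ holds with equality, so write $\le$; the conclusion $p<1$ still follows from $\tau_C<\tau_E$.

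\textbf{Converse.} Your refutation is correct, so the word ``equivalent'' in the lemma is an overstatement that no citation can repair. Concretely, take $\alpha_E=0.1$, $\tau_E=0.8$, $p=1$, $q=1/2$. This gives a non-trivial garbling with $\alpha_C=0.55$ and $\tau_C=0.9>\tau_E$, and this $C$ is even strictly Blackwell-dominated by $E$. Geometrically, for non-trivial sensors, $C$ is a garbling of $E$ exactly when $(\alpha_C,\tau_C)$ lies in the triangle with vertices $(0,0)$, $(1,1)$, $(\alpha_E,\tau_E)$. Assumption~\ref{A:sensor-ordering} only describes the open sub-triangle with vertices $(\alpha_E,\alpha_E)$, $(\tau_E,\tau_E)$, $(\alpha_E,\tau_E)$. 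As you say, only the forward direction is needed, for Condition 2 of Theorem~\ref{thm:myopic-thres}. The assumption's inequalities themselves are used directly in \eqref{eq:kink-ordering}.

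\textbf{Two small remarks.} First, your ``corrected equivalence'' is close to circular. Because $O_E$ is invertible, $G$ is forced, and your two extra conditions on $(p,q)$ are just $\alpha_C>\alpha_E$ and $\tau_C<\tau_E$ rewritten. It is cleaner either to state the lemma as a one-sided implication, or to state the genuine equivalence via the triangle condition above.

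Second, your treatment of the matrix convention is right. With $(O_s)_{ij}=\mathbb{P}(o_k=i\mid z_k=j)$, a garbling reads $O_C=GO_E$ with $G$ column-stochastic. Taken literally, $O_C=O_EG$ mixes the state index: a permutation $G$ would swap $\alpha_E$ and $\tau_E$, which is in general not a garbling.
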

\begin{proof}
See Theorem 12.4.2 in \cite{blackwell-book}.
\end{proof}

Let $\pi_k := \mathbb{P}(z_k=1|o_k, s_k,...,o_0,s_0)$ denote the posterior probability that the hidden state $z_k=1$, given all observations and sensor choices up until time $k$. The belief update can be done in two steps, using a Prediction step followed by an Update step (see e.g., \cite{russel2010}).

\noindent \textbf{Prediction Step.} Given the posterior $\pi_{k-1}$ from the previous step, the prediction belief before the observation is obtained at time $k$ is 
\begin{equation}
    \pi_k^- = T(\pi_{k-1}) := a_{01}(1-\pi_{k-1})+a_{11}\pi_{k-1}.
\end{equation}
Note that $T: [0,1] \rightarrow [0,1]$ is an affine map, $T(\pi)=a_{01} + (a_{11}-a_{01})\pi$.

\noindent \textbf{Update Step.} After choosing sensor $s_k = s$ and observing $o_k = o$, the posterior is
\begin{equation}
    \pi_k = U_s(\pi_k^-, o) := \begin{cases}
        \frac{\tau_s\pi_k^-}{\tau_s\pi_k^- + \alpha_s(1-\pi_k^-)} \quad &\text{if }o=1 \\
        \frac{(1-\tau_s)\pi_k^-}{(1-\tau_s)\pi_k^- + (1-\alpha_s)(1-\pi_k^-)} \quad &\text{if }o=0. 
    \end{cases}
    \label{eq:general-update}
\end{equation}

After obtaining the posterior $\pi_k$, the agent declares a classification $\hat{z}_k\in \{0,1\}$. Under a symmetric 0-1 loss, the optimal classifier is the maximum a posteriori (MAP) rule:
\begin{equation}
    \hat{z}_k = \boldsymbol{1}_{\{\pi_k \geq \frac{1}{2}\}}
\end{equation}
and the expected, instantaneous misclassification cost is 
\begin{equation}
    c(\pi_k) = \min(\pi_k, 1-\pi_k)
\end{equation}
We consider an infinite-horizon setting with discount factor $\gamma\in(0,1)$. Using the expensive sensor incurs an additive cost $\lambda > 0$ per use. The agent wants to minimize
\begin{equation}
    J(\pi_0: \mu) = \mathbb{E}^\mu \Big [ \sum_{k=0}^\infty \gamma^k \Big( c(\pi_k) + \lambda \boldsymbol{1}_{\{s_k=E\}}\Big) \Big | \pi_0^-=\pi_0 \Big],
    \label{eq:surrogate-objective}
\end{equation}
where $\mu: [0,1]\rightarrow\{C,E\}$ is the sensor selection policy. The problem is a discounted infinite-horizon POMDP, thus the optimal policy is belief stationary, i.e.,  the belief, $\pi_k$ is a sufficient statistic for the entire history of observations and actions (see e.g Thm 7.6.1 in \cite{Krishnamurthy_2016}). 

The instantaneous cost of using sensor $s$ given belief $\pi$ is the expected classification loss after the Bayesian update plus the sensor usage cost, i.e.,
\begin{equation}
    C(\pi, s) = \lambda \boldsymbol{1}_{\{s=E\}} + L_s(\pi),
\end{equation}
where 
\begin{align}
    \nonumber L_s(\pi) = p^+_s \min[U_s(\pi, 1), 1-U_s(\pi, 1)] + \\ 
    p_s^-\min[U_s(\pi, 0), 1-U_s(\pi, 0)]
    \label{eq:expected-posterior-classification-loss}
\end{align}
is the expected posterior classification error, and  $p_s^+ = \tau_s\pi + \alpha_s (1-\pi)$ and $p_s^- = (1-\tau_s)\pi + (1-\alpha_s)(1-\pi)$ are the observation probabilities. We use the instantaneous cost for defining a myopic policy.

\begin{definition}[Myopic policy] Let  $A(\pi) := L_C(\pi)-L_E(\pi)$ be the myopic advantage function, and $\Pi_s = \{\pi : A(\pi) > \lambda \}$ the expensive sensor region. The  \textit{myopic policy} selects the sensor minimizing the instantaneous cost,
\begin{equation}
    \mu_{\text{m}}(\pi) = \begin{cases}
        E \quad & \pi \in \Pi_s \\
        C \quad & \text{otherwise.}
    \end{cases}
\end{equation}
\end{definition}
Note that, by definition, the expected (immediate) benefit of using the expensive sensor exceeds the immediate sensor cost whenever $\pi\in\Pi_s$.

\begin{lemma}
    The expected posterior classification error, $L_s(\pi)$ is piecewise linear and has two breakpoints $\pi_s^{(0)}$ and $\pi_s^{(1)}$. The breakpoints satisfy $\pi_s^{(1)} < 1/2 < \pi_s^{(0)}$ and consequently, the derivative of $L_s(\pi)$ is
    \begin{equation}
    \frac{dL_s}{d\pi} = \begin{cases}
        +1 \quad & \text{on } (0,\pi_s^{(1)}) \\
        1-\alpha_s-\tau_s \quad & \text{on } (\pi_s^{(1)}, \pi_s^{(0)}) \\
        -1 \quad & \text{on } (\pi_s^{(0)}, 1).
    \end{cases} 
    \label{eq:L-slopes}
\end{equation}
\label{L:L-piecewise-linear-and-derivative}
\end{lemma}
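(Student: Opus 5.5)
The plan is to remove the denominators in $L_s$. Multiplying the posterior $U_s(\pi,o)$ by the probability of observing $o$ cancels the normalization, and
\begin{align*}
p_s^+ \min[U_s(\pi,1),1-U_s(\pi,1)] &= \min[\tau_s\pi,\ \alpha_s(1-\pi)],\\
p_s^- \min[U_s(\pi,0),1-U_s(\pi,0)] &= \min[(1-\tau_s)\pi,\ (1-\alpha_s)(1-\pi)].
\end{align*}
So $L_s(\pi)$ is a sum of two minima of affine functions of $\pi$. Each such minimum is piecewise linear with a single kink, and therefore $L_s$ is piecewise linear with at most two breakpoints.

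Next we locate the breakpoints. The first term switches where $\tau_s\pi=\alpha_s(1-\pi)$, that is, at $\pi_s^{(1)}=\alpha_s/(\alpha_s+\tau_s)$. Here $\tau_s\pi$ is the smaller branch for $\pi<\pi_s^{(1)}$. The second term switches where $(1-\tau_s)\pi=(1-\alpha_s)(1-\pi)$, that is, at $\pi_s^{(0)}=(1-\alpha_s)/(2-\alpha_s-\tau_s)$. Here $(1-\tau_s)\pi$ is the smaller branch for $\pi<\pi_s^{(0)}$.

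The ordering follows from the non-triviality part of Assumption~\ref{A:sensor-ordering}. Since $\alpha_s<\tau_s$, we get $\alpha_s/(\alpha_s+\tau_s)<1/2$. Since $1-\alpha_s>1-\tau_s$, we get $(1-\alpha_s)/((1-\alpha_s)+(1-\tau_s))>1/2$. Hence $\pi_s^{(1)}<1/2<\pi_s^{(0)}$, the two breakpoints are distinct, and we implicitly need $\alpha_s,\tau_s$ not both degenerate so that the denominators are positive.

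It remains to read off the slopes on the three intervals.
\begin{itemize}
\item On $(0,\pi_s^{(1)})$ both minima take their first branch, so $L_s=\tau_s\pi+(1-\tau_s)\pi=\pi$ and the slope is $+1$.
\item On $(\pi_s^{(1)},\pi_s^{(0)})$ we have $L_s=\alpha_s(1-\pi)+(1-\tau_s)\pi$, with slope $1-\alpha_s-\tau_s$.
\item On $(\pi_s^{(0)},1)$ we have $L_s=\alpha_s(1-\pi)+(1-\alpha_s)(1-\pi)=1-\pi$, with slope $-1$.
\end{itemize}
This gives \eqref{eq:L-slopes}.

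The only real care point is the cancellation identity in the first step, which needs $p_s^\pm>0$. For $\pi\in(0,1)$ this holds when $\alpha_s,\tau_s\in(0,1)$. In the boundary cases where $p_s^\pm=0$, the corresponding term is zero under either convention, so the identity still holds. Everything after that is elementary.
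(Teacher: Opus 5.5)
Your proposal is correct and follows the paper's proof essentially step for step: cancel the normalizations so that $L_s$ becomes a sum of two minima of affine functions, locate the kinks at $\alpha_s/(\alpha_s+\tau_s)$ and $(1-\alpha_s)/(2-\alpha_s-\tau_s)$, order them using $\alpha_s<\tau_s$, and read off the three slopes. Your side conditions are in fact automatic: $\alpha_s<\tau_s$ with $\alpha_s,\tau_s\in[0,1]$ already forces $\tau_s>0$ and $1-\alpha_s>0$, which gives positive denominators and $p_s^\pm>0$ on $(0,1)$.
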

\begin{proof}
    We can use the definition of the posterior update map (\ref{eq:general-update}) and the observation probabilities to rewrite the expected posterior classification loss (\ref{eq:expected-posterior-classification-loss}) and obtain 
\begin{align}
    \nonumber L_s(\pi) = &\min[\pi\tau_s, (1-\pi)\alpha_s] + \\
    &\min[\pi(1-\tau_s), (1-\pi)(1-\alpha_s)].
    \label{eq:expected-posterior-classification-loss-2}
\end{align}
Observe that  $L_s(\pi)$ is piecewise linear in $\pi$  and has two breakpoints,
\begin{align}
    \pi\tau_s &= (1-\pi)\alpha_s \Longrightarrow \pi = \frac{\alpha_s}{\tau_s + \alpha_s} =: \pi_s^{(1)} \\
    \nonumber \pi(1-\tau_s) &= (1-\pi)(1-\alpha_s) \Longrightarrow \\
    \pi&=\frac{1-\alpha_s}{2-\alpha_s-\tau_s} =: \pi_s^{(0)}.
\end{align}
Furthermore, by Assumption \ref{A:sensor-ordering} we have that $\alpha_s < \tau_s$, which means $\pi_s^{(1)} < 1/2 < \pi_s^{(0)}$. Taking the derivative on the intervals defined by $\pi_s^{(0)}$ and $\pi_s^{(1)}$ results in (\ref{eq:L-slopes}). 
\end{proof}
In addition, by Assumption \ref{A:sensor-ordering} we have that $\alpha_E < \alpha_C$ and $\tau_E > \tau_C$, and hence the breakpoints are ordered as 
\begin{equation}
    \pi_E^{(1)} < \pi_C^{(1)} < 1/2 < \pi_C^{(0)} < \pi_E^{(0)}.
    \label{eq:kink-ordering}
\end{equation}
To see that $\pi_E^{(1)} < \pi_C^{(1)}$, note that $\pi_s^{(1)}$ is decreasing in $\tau_s/\alpha_s$ and Assumption \ref{A:sensor-ordering} gives $\tau_E/\alpha_E > \tau_C/\alpha_C$. Similarly, $\pi_s^{(0)}$ is decreasing in $(1-\tau_s)/(1-\alpha_s)$ and $(1-\tau_E) /(1- \alpha_E) < (1-\tau_C) / (1-\alpha_C)$. We thus have the following.

\begin{theorem}
    (Threshold structure of Myopic policy). Under Assumption \ref{A:sensor-ordering}, the myopic advantage function $A(\pi)$ is concave on $[0,1]$. Consequently, the expensive sensor region $\Pi_s$ is either empty (if $\max_\pi A(\pi) \leq \lambda$) or a single open interval $(\underline{\pi}_\lambda, \overline{\pi}_\lambda) \subset (0,1)$.
\end{theorem}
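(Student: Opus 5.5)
The plan is to use the piecewise-linear description of $L_s$ from Lemma~\ref{L:L-piecewise-linear-and-derivative}, together with the breakpoint ordering (\ref{eq:kink-ordering}). Concavity of $A(\pi)=L_C(\pi)-L_E(\pi)$ then reduces to showing that the slope of $A$ is non-increasing across its breakpoints. Since $A$ is continuous and piecewise linear, this is equivalent to concavity.

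The slope of $A$ can be read off on the five intervals determined by (\ref{eq:kink-ordering}) by subtracting the slopes from (\ref{eq:L-slopes}):
\begin{itemize}
\item on $(0,\pi_E^{(1)})$, both $L_C$ and $L_E$ have slope $+1$, so $A'=0$;
\item on $(\pi_E^{(1)},\pi_C^{(1)})$, $L_C$ has slope $+1$ and $L_E$ has slope $1-\alpha_E-\tau_E$, so $A'=\alpha_E+\tau_E$;
\item on $(\pi_C^{(1)},\pi_C^{(0)})$, $A'=(1-\alpha_C-\tau_C)-(1-\alpha_E-\tau_E)=(\alpha_E+\tau_E)-(\alpha_C+\tau_C)$;
\item on $(\pi_C^{(0)},\pi_E^{(0)})$, $A'=-1-(1-\alpha_E-\tau_E)=\alpha_E+\tau_E-2$;
\item on $(\pi_E^{(0)},1)$, $A'=0$.
\end{itemize}

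We then check that this slope sequence is non-increasing.
\begin{itemize}
\item $0\le \alpha_E+\tau_E$ is trivial.
\item $\alpha_E+\tau_E \ge (\alpha_E+\tau_E)-(\alpha_C+\tau_C)$ holds since $\alpha_C+\tau_C\ge 0$.
\item $(\alpha_E+\tau_E)-(\alpha_C+\tau_C)\ge \alpha_E+\tau_E-2$ holds since $\alpha_C+\tau_C\le 2$.
\item $\alpha_E+\tau_E-2\le 0$ is trivial.
\end{itemize}
Hence $A$ is concave on $[0,1]$.

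A cleaner alternative bookkeeping, which I would mention as a check, writes each $\min$ term in (\ref{eq:expected-posterior-classification-loss-2}) separately. Since $\min$ of linear functions is concave, $L_s$ is concave, but $A$ is a difference of concave functions, so this alone does not suffice. The slope computation above is therefore the real argument. It also shows that $A=0$ at and near the endpoints, and that $A\ge 0$ everywhere, consistent with Lemma~\ref{L:Blackwell-dominance}.

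For the consequence, a concave function has convex superlevel sets. Therefore $\Pi_s=\{\pi:A(\pi)>\lambda\}$ is convex, i.e.\ an interval, and it is open because $A$ is continuous and the inequality is strict. It is empty iff $\max_\pi A\le\lambda$. Otherwise, since $\lambda>0$ and $A(0)=A(1)=0$ (both are $0$ on the outer intervals, as computed above), the interval avoids neighborhoods of $0$ and $1$, giving $(\underline{\pi}_\lambda,\overline{\pi}_\lambda)\subset(0,1)$.

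The main obstacle is the correctness of the breakpoint ordering (\ref{eq:kink-ordering}), on which the five-interval slope table relies. It is already established in the paper, so the remaining care is just in confirming continuity of $A$ at the kinks, which holds because each $L_s$ is a sum of minima of continuous functions.
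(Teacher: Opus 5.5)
Your proposal is correct and follows the paper's proof essentially step for step: the same five-interval slope table for $A=L_C-L_E$ built from Lemma~\ref{L:L-piecewise-linear-and-derivative} and (\ref{eq:kink-ordering}), the same check that the slopes are non-increasing, and the same superlevel-set conclusion. You are somewhat more explicit than the paper about comparing the interior slopes with the zero slopes on the outer intervals, and about why $\Pi_s$ is open and contained in $(0,1)$. One small correction: $A(0)=A(1)=0$ does not follow from the slope table alone, which only shows that $A$ is constant near the endpoints; it comes from evaluating $L_s(0)=L_s(1)=0$ directly in (\ref{eq:expected-posterior-classification-loss-2}), as the paper does.
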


\begin{proof}
    Using Lemma \ref{L:L-piecewise-linear-and-derivative}, we can conclude the myopic advantage function $A(\pi)$ is piecewise linear with breakpoints at $\{\pi_E^{(1)}, \pi_C^{(1)}, \pi_C^{(0)}, \pi_E^{(0)} \}$. Using the ordering (\ref{eq:kink-ordering}) and the slopes (\ref{eq:L-slopes}) we can determine the slope of $A$ in all five intervals. In the first interval $\pi \in (0, \pi_E^{(1)})$, both sensors are in their left-most regime, so $A' = 1-1 = 0$. In the second interval $\pi \in(\pi_E^{(1)}, \pi_C^{(1)})$, Sensor $E$ has passed its first breakpoint and is now in the middle regime with slope $1-\alpha_E-\tau_E$. Sensor $C$ is still in its left-most region with slope $+1$. So $A' = 1-(1-\alpha_E-\tau_E) = \alpha_E + \tau_E > 0$. In the third interval $\pi \in (\pi_C^{(1)}, \pi_C^{(0)})$, both sensors are now in the middle regime. This means $A' = 1-\alpha_C-\tau_C-(1-\alpha_E-\tau_E) = (\alpha_E + \tau_E)-(\alpha_C + \tau_C)$ which can be either positive, negative or zero. However, it satisfies $A' \leq \alpha_E+ \tau_E$ (slope at Interval 2) since $\alpha_C + \tau_C > 0$. In the fourth interval $\pi \in (\pi_C^{(0)}, \pi_E^{(0)})$, Sensor $C$ passed its second breakpoint and now has a slope of $-1$ while Sensor $E$ is in its middle regime. This means $A' = -1 -(1-\alpha_E-\tau_E) = \alpha_E + \tau_E - 2 < 0$. This satisfies $A' < (\alpha_E + \tau_E) - (\alpha_C + \tau_C)$ since $(\alpha_C + \tau_C) < 2$. In the final interval $(\pi_E^{(0)},1)$, both sensors are in their right-most regime where the slope if $L_s$ is $-1$, so $A' = -1-(-1) = 0$.
    
    In summary the derivative of $A(\pi)$ is 
    \begin{equation}
        A'(\pi) = \begin{cases}
            0 \quad & \text{on } (0, \pi_E^{(1)}) \\
            \alpha_E + \tau_E \quad & \text{on } (\pi_E^{(1)},\pi_C^{(1)}) \\
            (\alpha_E + \tau_E)-(\alpha_C + \tau_C) \quad & \text{on } (\pi_C^{(1)},\pi_C^{(0)}) \\
            (\alpha_E + \tau_E)-2 \quad & \text{on } (\pi_C^{(0)},\pi_E^{0)}) \\
            0 \quad & \text{on } (\pi_E^{(0)},1), \\
        \end{cases}
    \end{equation}
    where the three interior intervals form a non-increasing sequence. $A$ is strictly increasing on Interval 2 and strictly decreasing on interval 4. Furthermore, from the definition of $L_s$ in (\ref{eq:expected-posterior-classification-loss}) it follows that $L_s(0) = L_s(1) = 0$ for $s \in \{C,E\}$, which means $A(\pi) = 0$ on $\pi\in[0,\pi_E^{(1)})\cup (\pi_E^{(0)},1]$. Consequently, $A(\pi)$ is concave which implies that the super-level set $\{A > \lambda\}$ is convex, i.e., it is an interval. 
\end{proof}
We can now state the following result.
\begin{theorem}
\label{thm:myopic-thres}
    Consider the POMDP in (\ref{eq:surrogate-objective}) with two actions corresponding to  sensors $E$ and $C$, where
    \begin{enumerate}
        \item The instantaneous cost $C(\pi, s)$ is concave in $\pi$ for each $s$. 
        \item Sensor $E$ Blackwell-dominates Sensor $C$: There exists a stochastic matrix $G$ such that $O_C = O_EG$ (see Lemma \ref{L:Blackwell-dominance}).
        \item The transition probabilities of the hidden state are action-independent (the sensor choice does not affect how $z_k$ evolves). 
    \end{enumerate}
    Let $\Pi_s = \{\pi: C(\pi, E)< C(\pi, C)\}$ be the set of beliefs where the expensive sensor has lower instantaneous cost. Then the optimal policy, $\mu^*(\pi)$ which minimizes (\ref{eq:surrogate-objective}) satisfies $\mu^*(\pi) = E$ for all $\pi\in\Pi_s$.
\end{theorem}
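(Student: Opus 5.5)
The plan is to compare the two actions through the Bellman equation and use Blackwell dominance to show that $E$ is never worse in the continuation term. Write the optimal value function as the fixed point
\begin{equation*}
V(\pi) = \min_{s\in\{C,E\}} \Big\{ C(\pi,s) + \gamma \sum_{o\in\{0,1\}} p_s^o(\pi)\, V\big(T(U_s(\pi,o))\big) \Big\},
\end{equation*}
where $p_s^1=p_s^+$ and $p_s^0=p_s^-$. The fixed point exists and is unique because the Bellman operator is a $\gamma$-contraction on bounded functions of $[0,1]$. By condition 3, the prediction map $T$ is the same affine map for both actions.

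\textbf{Step 1: concavity of $V$.} I would prove by value iteration that every iterate $V_n$, starting from $V_0\equiv 0$, is concave in $\pi$. The standard fact is that if $W$ is concave on $[0,1]$, then for each $s$ the map $\pi\mapsto\sum_o p_s^o(\pi)\,W(U_s(\pi,o))$ is concave. This is the perspective argument: $p_s^o(\pi)\,W(U_s(\pi,o))$ is the perspective of $W$ evaluated at the unnormalized posterior, whose components are linear in $\pi$. Composing with the affine map $T$ preserves concavity. Condition 1 gives concavity of $C(\pi,s)$, so each bracket is concave. The pointwise minimum of concave functions is concave. Concavity then passes to the limit $V$ under uniform convergence.

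\textbf{Step 2: Blackwell comparison of the continuation values.} Write $Q_s(\pi)=\sum_o p_s^o(\pi)\,V(T(U_s(\pi,o)))$. I claim $Q_E(\pi)\le Q_C(\pi)$ for all $\pi$. Condition 2 gives $O_C=O_E G$. Each posterior under $C$ is then a convex combination of posteriors under $E$, with weights coming from $G$ and the $E$-observation probabilities. Equivalently, the distribution of posteriors under $E$ is a mean-preserving spread of the distribution under $C$; both have mean $\pi$. Since $W:=V\circ T$ is concave, Jensen's inequality gives
\begin{equation*}
\sum_{o'} p_C^{o'} W\big(U_C(\pi,o')\big) \ge \sum_{o'} p_C^{o'} \sum_o w_{o\mid o'}\, W\big(U_E(\pi,o)\big) = \sum_o p_E^{o} W\big(U_E(\pi,o)\big),
\end{equation*}
which is exactly $Q_C(\pi)\ge Q_E(\pi)$.

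\textbf{Step 3: conclusion.} Take $\pi\in\Pi_s$, so $C(\pi,E)<C(\pi,C)$. Adding Step 2 gives
\begin{equation*}
C(\pi,E)+\gamma Q_E(\pi) < C(\pi,C)+\gamma Q_C(\pi),
\end{equation*}
so the minimizer in the Bellman equation is $E$ and $\mu^*(\pi)=E$.

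The main obstacle is Step 2. It requires writing the garbling in posterior terms carefully and checking that the weights $w_{o\mid o'}$ are nonnegative and sum to one. I would define
\begin{equation*}
w_{o\mid o'}=\frac{p_E^o(\pi)\,\mathbb{P}(o'\mid o)}{p_C^{o'}(\pi)},
\end{equation*}
which are Bayes weights read off from the entries of $G$ (with orientation matching the convention $O_C=O_EG$), and verify that $U_C(\pi,o')=\sum_o w_{o\mid o'}U_E(\pi,o)$. A secondary subtlety arises in Step 1: condition 1 gives concavity only of $C(\pi,s)$, while the perspective argument also needs concavity of the continuation term. The induction must therefore carry concavity of $V_n$ itself, not just of the stage cost.
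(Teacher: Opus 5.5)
Your proof is correct. It differs from the paper's in what it actually proves. The paper's proof checks that the surrogate model meets the three hypotheses, and then defers the optimality argument itself to Thm.~14.7.1 of \cite{Krishnamurthy_2016}. The checks are:
\begin{itemize}
\item concavity of $C(\pi,s)$, because $L_s$ is a sum of minima of affine functions;
\item Blackwell dominance, via Lemma~\ref{L:Blackwell-dominance};
\item action-independence, by construction.
\end{itemize}

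You take the three conditions as given, which the statement allows, and reconstruct the argument behind that citation. You get concavity of $V$ by value iteration with the perspective construction. You combine the garbling identity $U_C(\pi,o')=\sum_o w_{o\mid o'}U_E(\pi,o)$ with Jensen's inequality for $W=V\circ T$ to obtain $Q_E\le Q_C$. You then use the strict Bellman inequality on $\Pi_s$.

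Your route has two advantages. First, it shows where each hypothesis enters: condition 1 in the induction, condition 3 to fold the common affine $T$ into a single concave $W$ so the comparison isolates the observation channels, and condition 2 in the Jensen step. Second, it makes explicit bookkeeping the paper leaves implicit: the decision state is the predicted belief $\pi_k^-$, and $C(\pi,s)$ is the conditional expectation of $c(\pi_k)+\lambda\boldsymbol{1}_{\{s_k=E\}}$ given that state.

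Your caution about orientation is justified. With the paper's indexing $(O_s)_{ij}=\mathbb{P}(o_k=i\mid z_k=j)$, a garbling is a left multiplication $O_C=GO_E$. Working directly with $\mathbb{P}(o'\mid o)$, as you do, avoids inheriting that transposition.

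The paper's route is shorter, and it does the one thing yours omits: checking that the concrete sensor model of Section~\ref{sec:simple-pomdp} actually satisfies the hypotheses.
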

\begin{proof}
    Condition 1 is satisfied since $C(\pi, s)$ is a sum of a constant and $L_s(\pi)$ which is defined as the minimum of affine functions (\ref{eq:expected-posterior-classification-loss-2}) which is concave in $\pi$. Condition 2 is satisfied by Lemma \ref{L:Blackwell-dominance} and Condition 3 is satisfied by construction. See Thm 14.7.1 in \cite{Krishnamurthy_2016} for more details. 
\end{proof}
This means that whenever the myopic policy chooses the expensive sensor, so does the optimal policy. The myopic policy may, however, be using the expensive sensor less frequently than the optimal policy.



\begin{algorithm}[t]
\caption{LASE-AD in the Control Loop}
\begin{algorithmic}[1]
\State $S_0 \gets \mathfrak{Z}$ \Comment{The set of trustworthy sensors}
\State $B \gets \text{FIFO(max\_len=}T_{\text{r}}$)
\While{True}
    \State $y_k^r \gets \text{GetRawSensorValues()}$
    \State $y_k \gets \mathcal{P}_{S_k}(y_k^r)$
    \State $\hat{x}_{k:k-1} \gets \text{EKFPredict}(\hat{x}_{k-1}, u_{k-1})$
    \State $a_k \gets \mathcal{D}(y_k, u_{k-1})$
    \State Update belief $\pi_k$ from (\ref{eq:alert-belief-update}) with $a_k$
    \State $\text{use\_probing}, \mathfrak{z} \gets \text{DecideProbing}(\pi_k)$ \label{algline:probing}\Comment{Returns a boolean and a sensor to probe}
    \If{use\_probing}
        \State $\hat{x}^{S_k \setminus \{ \mathfrak{z} \}} \gets \text{ReEstimateWithoutSensors}(B, \{\mathfrak{z}\})$ \label{algline:reest1}
        \State Obtain $u_k$ by solving (\ref{eq:probing-ctr-signal})
    \EndIf

    \If{probing\_used\_last\_time}
        \State Update belief $\pi_k$ with (\ref{eq:probing-update}) and $\hat{x}^{S_k \setminus \{ \mathfrak{z} \}}$ as alternative hypothesis
    \EndIf
    \State $S_{k+1} \gets \text{DecideTrustableSensors}(\pi_k)$ \label{algline:deciding}
    \If{$S_k \neq S_{k+1}$}
        \State $\hat{x}_{k} \gets \text{ReEstimateWithoutSensors}(B, \mathfrak{Z} \setminus S_{k-1})$ \label{algline:reest2}    
    \Else
        \State $\hat{x}_k \gets \text{EKFUpdate}(\hat{x}_{k:k-1}, y_k)$    
    \EndIf
    \State $B\text{.add}((\hat{x}_k, y_k))$
    \If{!use\_probing}
        $u_k \gets \text{NominalController}(\hat{x}_k)$
    \EndIf
    \State Execute $u_k$ and let system (\ref{eq:sys-dynamics}) evolve.
    \State $\text{probing\_used\_last\_time} \gets \text{use\_probing}$
    \State $k\gets k+1$
\EndWhile
\end{algorithmic}
\label{alg:lase-ad}
\end{algorithm}


\section{Numerical Results }
\label{sec:numerical}
\subsection{Evaluation methodology}
We use an extended version of the CartPole environment for the evaluation~\cite{brockman2016openaigym}, where the goal is to balance a pole on a cart. The state $x_k = [p_k, v_k, \theta_k, \omega_k]^\top$ contains cart position/velocity and pole angle/angular velocity~\cite{bartocartpole}, discretized via the RK4 method. The control input, $u_k\in[-10,10]$ is a horizontal force applied to the cart. The system is equipped with $\vert\mathfrak{Z}\vert=3$ sensors: a wheel encoder, an external camera and an IMU. The wheel encoder (E) outputs raw measurements of the position and velocity of the cart, $y_{E,k}^r=(p_{E,k}, v_{E,k})$. The external camera (C) produces raw measurements $y_{C,k}^r = (p_{C,k}, \theta_{C,k})$ and the IMU (I) gives measurements $y_{I,k}^r = (\dot{v}_{I,k}, \omega_{I,k})$. The three sensors have covariance matrices $R_{\mathfrak{z}}$ for $\mathfrak{z}\in \{ E,C,I \}$. The perception pipeline $\mathcal{P}$ produces soft measurements $y_k = \mathcal{P}(y_{E,k}^r, y_{C,k}^r, y_{I,k}^r)$ from the raw measurements. It computes weighted averages between raw measurements and performs sensor fusion when needed. As an example, the soft measurement for the position $p_{\text{soft},k} = wp_{E,k}^r + (1-w)p_{C,k}^r$ is a weighted average of the raw measurements, $w$ is set to the value yielding the smallest variance. The soft measurement for the velocity $v_{\text{soft},k}$ is computed as a weighted average between the raw measurement from the encoder and a fused estimate between the acceleration from the IMU and the previous estimate. The pipeline $\mathcal{P}$ produces soft measurements that satisfy equation (\ref{eq:soft-measurement-state}) with $C = I_{4\times4}$. The graph $\mathcal{G}$ corresponding to $\mathcal{P}$ is shown in Fig.~\ref{fig:bayesiannet}. In the figure, $z_{1,k}$ is the encoder, $z_{2,k}$ the camera and $z_{3,k}$ the IMU; $s_{1,k}$ is the position, $s_{2,k}$ the velocity, $s_{3,k}$ the pole angle and $s_{4,k}$ the angular velocity. 
$\mathcal{D}$ is the CUSUM detector from \cite{savior}, tuned on benign data to achieve a false positive rate of under 5\%. The false negative rate used in (\ref{eq:bayesnet-fp-fn}) is set to 30\%. The cost matrices in (\ref{eq:main-opt-problem}) are $M_x = \text{diag([1 1 20 2])}$ and $M_u = 1$.

We explore two approaches for computing the thresholds for active learning. The first approach is to simulate attacks against sensors with a stochastic attacker. We set the attack transition probabilities to $\mathbb{P}(z_{i,k+1}=1|z_{i,k}=0) = 0.01$ and $\mathbb{P}(z_{i,k+1}=1|z_{i,k}=1) = 0.99$ and used fixed attack magnitudes. The thresholds are tested against the stochastic attacker and optimized with Bayesian optimization (BO). We call this approach \textbf{LASE-AD-S}(tochastic). The other approach is to optimize the thresholds with BO on benign data, denoted by \textbf{LASE-AD-B}(enign). The resulting thresholds from BO was to probe when $\pi\in(0.5, 0.59)$ for LASE-AD-S and when $\pi \in (0.499..., 0.5)$ for LASE-AD-B. This means LASE-AD-B will only use the passive part of LASE-AD and never probe. 



{\bf Baselines} We compare our work against five baselines. All baselines use the same LQR as nominal controller as the LASE-AD variants. \textbf{Normal}: no defense, acts on attacked measurements. \textbf{WoLF-IMQ, WoLF-MD, WoLF-TMD}: outlier-robust Kalman filter variants from \cite{outlier-robust-ekfs}, which scale the Kalman gain by a weighting function. The \textbf{WoLF}-methods have a hyperparameter and this is tuned for each attack scenario with BO. \textbf{KalmanPred}: upon CUSUM detection, this method ignores all sensor readings and relies on EKF prediction. An oracle signals when the attack ends, giving this baseline an advantage. 

{\bf Attack Scenarios} We consider three attack scenarios. The scenario EncoderAttack($t$) is an attack on the wheel encoder during $t$ seconds. The attack is an additive term of 0.5 that is added to  $p_{E,k}$ and $v_{E,k}$ coming from the encoder. The second scenario is called Encoder-IMUAttack, here the same bias is injected to the encoder when $t\in[3.0,6.0]$. A bias is also injected to the IMU, an additive term of 0.9 is added to $\omega_{I,k}$ and 0.2 is added to $\dot{v}_{I,k}$ when $t\in[4.0,7.0]$. This means the attacks overlap for 2.0 seconds. The final and most complex scenario is the E(ncoder)I(MU)C(amera)Attack where all sensors are compromised at some point. The encoder is attacked for $t\in[3.0, 4.0]$, then the IMU for $t\in[4.0,6.0]$ and lastly the camera for $t\in[6.0,7.0]$. The injected biases are the same as in EncoderAttack($t$) and Encoder-IMUAttack for the encoder and IMU while the camera is attacked by adding 0.3 to $p_{C,k}$ and 0.15 to $\theta_{C,k}$. We refer to the scenario without attack as \emph{NoAttack}. The system is simulated for a total of $10$ seconds and time is discretized by a step size of $5$~ms.

\subsection{Experimental Results}
Fig. \ref{fig:failure-rate-bars} shows the task failure rate for the LASE-AD variants and the baseline methods, computed over $50$ evaluations. A task is considered failed if $|\theta_k| > 90^\circ$ at any point during the simulation, i.e, if the pole falls. The failure rate is $0$ for \textit{NoAttack} using all methods, hence we omit the scenario in the figure. We observe that the two LASE-AD variants achieve a significantly lower failure rate than the baselines across all scenarios, and LASE-AD-S outperforms the passive LASE-AD-B variant, especially in the two multi-sensor scenarios. The gap widens with attack duration; under EncoderAttack(0.5) some baselines do not fail, but in the attack scenarios with longer duration, all baselines fail 100\%. The baselines degrade because both the \textbf{WoLF}-methods and \textbf{KalmanPred} essentially rely on EKF predictions when measurements are deemed unreliable, and these predictions drift as the attacks persist. LASE-AD avoids this by identifying and excluding compromised sensors, allowing the filter to still benefit from trustworthy measurements. 

\begin{figure}
    \centering
    \includegraphics[width=1.0\linewidth]{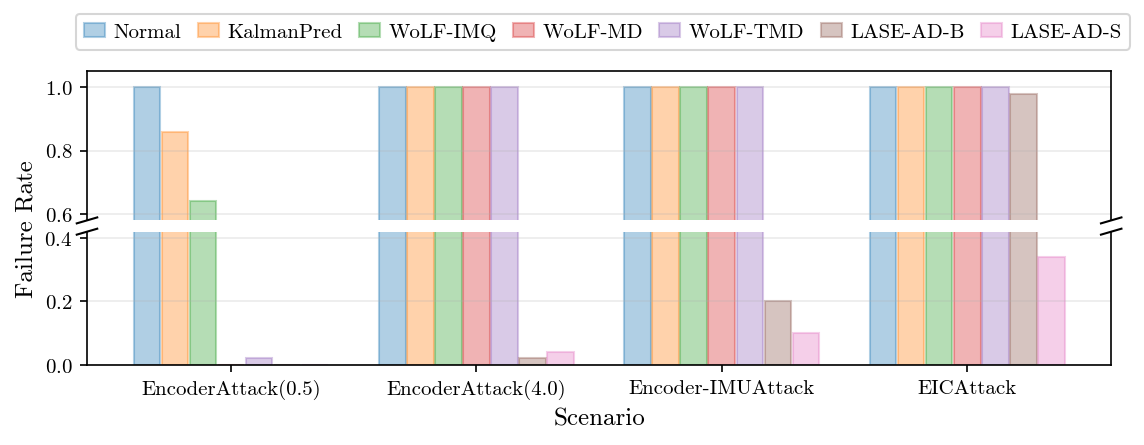}
    \caption{Task failure rate for $4$ attack scenarios, computed over 50 simulations.}
    \label{fig:failure-rate-bars}
\end{figure}
Fig.~\ref{fig:control-error-boxes} shows the control cost distribution over non-failed evaluations. Methods not included had 100\% failure rate for the respective scenario. Under benign conditions, we observe that LASE-AD does not result in increased control error compared to the baselines, i.e., it avoids failures without increased control error when there is no attack. An increase in control error can be seen in the multi-sensor attack scenarios compared to the others. This is partially attributed to the compromised sensors being relatively more difficult to distinguish, but also because the system acts on noisier measurements from perception pipelines using fewer sensors.
\begin{figure}
    \centering
    \includegraphics[width=1.0\linewidth]{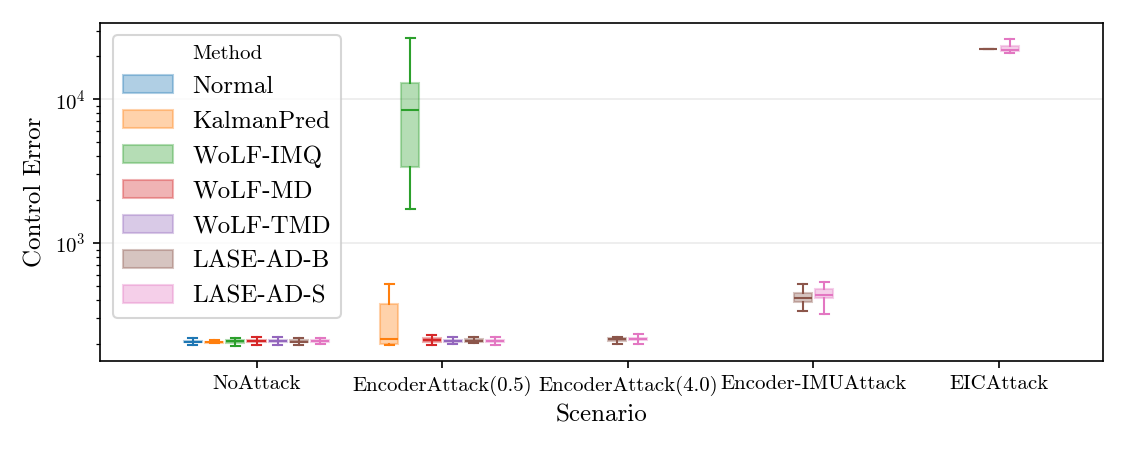}
    \caption{Control error distribution for $5$ scenarios, for non-failed tasks. The boxes show median, inter-quartile range and extreme values.}
    \label{fig:control-error-boxes}
\end{figure}
\begin{figure}
    \centering
    \includegraphics[width=1.0\linewidth]{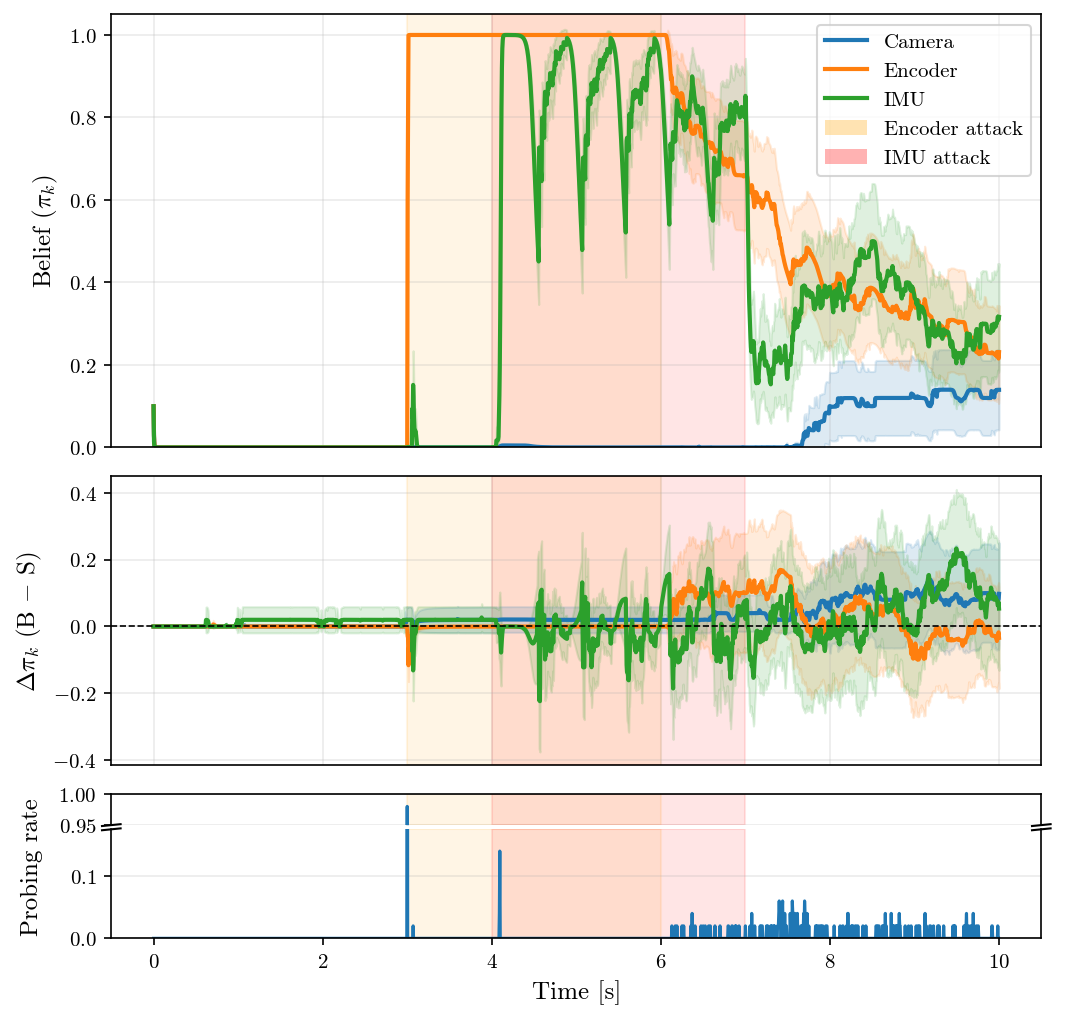}
    \caption{Top: Average belief over time for the LASE-AD-S method. Middle: Average belief difference between LASE-AD-B and LASE-AD-S. Bottom: Rate of usage of the probing control signal for LASE-AD-S.}
    \vspace{-19pt}
    \label{fig:belief-encoder-attack}
\end{figure}

Fig~\ref{fig:belief-encoder-attack} illustrates the belief dynamics in the Encoder-IMUAttack scenario. The top panel shows that LASE-AD-S rapidly identifies both attacked sensors, though the IMU belief exhibits more variability due to ambiguity caused by overlapping attacks on sensors sharing state components (see Fig.~\ref{fig:bayesiannet}). The middle and bottom panels show where active probing improves the belief. Between $t=6.0$ and $t=7.0$, after the encoder attack has ended, LASE-AD-B still assigns high attack probability to the encoder because alert-based inference alone cannot quickly distinguish between the ongoing IMU attack from a continued encoder attack. On the contrary, LASE-AD-S, which probes during the uncertain period (bottom panel), correctly lowers the encoder belief sooner. Recall that LASE-AD-B is a passive variant, so the comparison between the variants directly quantifies the marginal value of active probing.

\section{CONCLUSIONS AND FUTURE WORK}
\label{sec::conclusion}
We have presented LASE-AD, a framework combining passive Bayesian alert interpretation with active learning to bridge the gap between sensor attack detection and recovery. Experiments on a sensor-extended version of the CartPole environment showed that LASE-AD substantially outperforms existing robust state estimation baselines, particularly under prolonged and multi-sensor attacks where methods relying on EKF predictions show degraded performance. 

Important directions of future work include extending the framework to learn detector false negative rates online and explore adaptive probing strategies. Validation on real-world systems, such as unmanned aerial and ground vehicles, is also an interesting direction for future work.

\bibliographystyle{unsrt}
\bibliography{references.bib}
\end{document}